\documentclass[]{article}
\usepackage{amsmath}
\usepackage{amsfonts}
\usepackage{amssymb}
\usepackage{amsthm}
\usepackage{mathrsfs}
\usepackage{enumerate}
\usepackage{xspace}
\usepackage[hidelinks]{hyperref} % The hidelinks is to get rid of borders around refernces.
\usepackage{xcolor} % for the \merk command I define below (copying Doerr)

\usepackage[backend=bibtex]{biblatex}% https://tex.stackexchange.com/questions/306218/simple-way-to-use-bibtex

\newtheorem{theorem}{Theorem}

\newtheorem{lemma}[theorem]{Lemma}

\DeclareMathOperator*{\argmin}{arg\,min}

\newcommand{\oea}{\mbox{$(1 + 1)$~EA}\xspace}

\newcommand{\onemax}{\textsc{OneMax}\xspace}
\newcommand{\hottopic}{\textsc{HotTopic}\xspace}
\newcommand{\zeromax}{\textsc{ZeroMax}\xspace}
\newcommand{\LO}{\textsc{Leading\-Ones}\xspace}
\newcommand{\leadingones}{\LO}

\newcommand{\blockleadingones}{\textsc{Block\-LeadingOnes}\xspace}

\newcommand{\jump}{\textsc{Jump}\xspace}

\newcommand{\hillpathjumpak}{\ensuremath{\textsc{HillPathJump}_{a,k}}\xspace}
\newcommand{\hillpathjump}{\ensuremath{\textsc{HillPathJump}}\xspace}

\newcommand{\sspace}{\ensuremath{{\{0,1\}^n\xspace}}}

\newcommand{\zeros}{\ensuremath{\mathbf{0}^n}\xspace}
\newcommand{\pak}{\ensuremath{P_{a,k}}\xspace}
\newcommand{\sqrtn}{\ensuremath{\sqrt{n}}\xspace}

\newcommand{\eps}{\varepsilon}

\DeclareMathOperator{\expectation}{E}

\title{All Mutation Rates $c/n$ for the \\ $(1+1)$ Evolutionary Algorithm}
\author{Andrew James Kelley}
\begin{document}

\maketitle

\begin{abstract}
	For every real number $c \geq 1$ and for all $\varepsilon > 0$, 
	there is a fitness function $f : \{0,1\}^n \to \mathbb{R}$ for which the 
	optimal mutation rate for the $(1+1)$ evolutionary algorithm on $f$, denoted $p_n$, 
	satisfies $p_n \approx c/n$ in that $|np_n - c| < \varepsilon$. In other words, the set of all $c \geq 1$ for which
	the mutation rate $c/n$ is optimal for the $(1+1)$ EA is dense in the interval $[1, \infty)$.
	To show this, a fitness function is introduced which is called
	 \textsc{HillPathJump}.
	 \end{abstract}
\section{Introduction}
In the last 25 years, many theorems have been proved about evolutionary algorithms (EAs). These results have helped
researchers make EAs that are better at optimizing relevant functions.

In some cases, we now know asymptotics of the expected runtime of certain EAs when using a particular mutation rate.
Sometimes, the optimal static mutation rate (i.e., the one that minimizes the expected runtime), is known. One might ask
what the set of possible optimal mutation rates is.

Recently, the author showed in \cite{Kelley26} that for all constant mutation rates $p = \Theta(1)$ and for all $\eps > 0$, there exists a fitness function for which
the optimal static mutation rate for the \oea is within $\eps$ of $p$, and this even includes all mutation
rates that are larger than 1/2.

For related work, note that for the \oea, it often is suggested that $1/n$ is the mutation rate to use; see \cite{DoerrLMN17}. We know that $1/n$ is
the optimal for the \oea on \onemax, and this mutation rate is also optimal for the $(1+\lambda)$ EA, as long as $\lambda$ isn't too large 
(see \cite{GiessenW17}).

For functions with local optima, it is known that the mutation rate can be larger than $1/n$. Indeed, it is shown in \cite{DoerrLMN17} 
that for $\jump_m$, the best mutation rate to use is $m/n$, where it is assumed $m = o(n)$. Also, \cite{JansenW00} introduces
a fitness function for which a mutation rate of $\Theta(\log n /n)$ is optimal for the \oea.

Note that for an optimal mutation rate of $c/n$ were $c$ is not an integer, both \leadingones (see \cite{BottcherDN10}) and the 
generalization \blockleadingones (see \cite{doerrK2023fourier}) have
as their optimal mutation rate for the \oea to be $c/n$, where $c$ is defined to be $c = \argmin_{x > 0} (e^x-1)/x^2$, and so the optimal
mutation rate here is approximately $1.5936/n$.
%See \cite{BottcherDN10} for \leadingones and \cite{doerrK2023fourier} for \blockleadingones.

It is natural to ask if there are other mutation rates $c/n$, with $c$ not an integer, for which $c/n$ is optimal for the \oea. This paper
shows that the set of all $c$ for which $c/n$ is the optimal mutation rate is dense in the infinite interval $[1, \infty)$.
To do this, the fitness function \hillpathjumpak is introduced.

As for mutation rates $c/n$ that are optimal with $c < 1$, note that it is possible that the optimal mutation rate for the 
\hottopic fitness functions have $c < 1$. See \cite{Lengler18} and \cite{LenglerS18}. In \cite{LenglerS18}, it was shown that
for every constant $c \geq 2.2$, there is a strictly monotone function such that with high probability, the runtime of the 
\oea with mutation rate $c/n$ is $e^{\Omega(n)}$. However, as shown in \cite{DoerrJSWZ10}, this contrasts with the runtime 
of $O(n \log n)$ for when $c \in (0, 1)$.
The result on $c \geq 2.2$ is a strengthening of a result from \cite{DoerrJSWZ13},
where there, $c > 16$.

The rest of this paper is as follows. Section~\ref{sec:definition_of_hillpathjump} defines \hillpathjump. 
Section~\ref{sec:results} lists the main results, and Section~\ref{sec:proofs} proves the main result(s).

\section{Definition of \hillpathjump}
\label{sec:definition_of_hillpathjump}
In this paper, for simplicity, we assume $n$ is a perfect square so that $\sqrt{n}$ is an integer.  

This section defines the \hillpathjump fitness functions. From lowest fitness to highest fitness, the first part is a large
``hill'' which is like \onemax except
that it is instead maximizing the number of zeros and so is called \zeromax. Afterwards is a path that leads to a jump.
The jump is of $k$ bits to the highest fitness individual, where $k$ is a constant (not depending on $n$).

%\merk{The change suggested here is no longer needed, I think: I'm changing the beginning part from $\sqrtn$ to $0.6n$}
For any $a > 0$ and integer $k \geq 4$, we will need to construct a path $\pak$ of length $a n^{k-1}$ starting from \zeros\ (the $n$-bit all-zeros string) and 
most of which is reasonably far from \zeros. The hamming distance between adjacent elements in the path is 1. 
The first $\sqrtn$ terms of $\pak$ is the sequence of points $z_1, z_2, \ldots, z_{\sqrtn}$,
where $z_i = 1^i0^{n-i}$ for $i = 1, 2\ldots, \sqrtn$.
(We assume that $n$ is large enough so that $an^{k-1} > \sqrtn$.) 

The rest of the path needs to be spread apart from itself and from \zeros. We do this by expanding a gray code by replacing 
each bit in a gray code with $\sqrtn$ bits.
Indeed, take a gray code on $(n - \sqrtn)/\sqrtn$ bits and then expand each bit to be $\sqrtn$ bits
(repeated).
As an example, let $n = 16$. Consider the gray code on 3 bits (i.e., on (16-4)/4 bits):
000, 001, 011, 010, 110, 111, 101, 100. Then, replace each bit with that bit repeated 4 times,
and expand the gray code path by replacing each transition 0 to 1 with 0000, 0001, 0011, 0111, 1111,
and do the opposite for transitions from 1 to 0. Expanding the above $2^3$ elements of the
gray code gives $4\cdot 2^3 - 3$ elements (left to right, top to bottom with spaces added for
readability); see Table~1.

\begin{table}[]
\begin{tabular}{llll}
 0000 0000 0000,& 0000 0000 0001, & 0000 0000 0011, & 0000 0000 0111, \\
 0000 0000 1111,&  0000 0001 1111, &  0000 0011 1111, &  0000 0111 1111, \\
 0000 1111 1111, & 0000 1111 0111, & 0000 1111 0011, & 0000 1111 0001, \\
 0000 1111 0000, & 0001 1111 0000,& 0011 1111 0000,& 0111 1111 0000,\\
 1111 1111 0000, & 1111 1111 0001, & 1111 1111 0011, & 1111 1111 0111,\\
 1111 1111 1111, &1111 0111 1111, &1111 0011 1111, &1111 0001 1111, \\
 1111 0000 1111, & 1111 0000 0111,&1111 0000 0011, & 1111 0000 0001,\\
 1111 0000 0000 & & & \\
\end{tabular}
\caption{Expanded gray code. The left-most column is the gray code with each
bit repeated 4 times. This defines a path from left to right, top to bottom.}
\end{table}

%The left-most column represents the expanded gray code, where each bit gets
%replaced with it repeated 4 times. To get to the next expanded gray code element,
%you just flip one bit at a time (going along the row to the right).

In general, first define a gray code path on $N = (n - \sqrtn)/\sqrtn$ bits,
which has $2^N$ elements in it: $y_1, y_2, \ldots, y_{2^N}$. Next, repeat
each bit in each $y_i$ a total of $\sqrtn$ times to form a path in $\{0, 1\}^{N\cdot \sqrtn}$
where adjacent elements have a hamming distance of $\sqrtn$. Next,
fill in the gaps so a transition from $0^{\sqrtn}$ to $1^{\sqrtn}$ happens
via $0^{\sqrtn - i}1^i$ for $i = 1,2,\ldots,\sqrtn$. Similarly, fill in a transition
from $1^{\sqrtn}$ to $0^{\sqrtn}$ via $0^{i}1^{\sqrtn - i}$ for 
$i = 1,2,\ldots,\sqrtn$. Doing this creates a path $\tilde{x}_1, \tilde{x}_2, \ldots, \tilde{x}_{M}$  in $\{0, 1\}^{N\cdot \sqrtn}$ with $M = \Theta(\sqrtn 2^N)$ elements.

To get the path $\pak = z_1, z_2, \ldots, z_{an^{k-1}}$ of $a n^{k-1}$ elements (where for simplicity\footnote{The asymptotic runtime of the \oea on \hillpathjump 
is not sensitive to changes in the path length being made larger or smaller by any constant number of elements. See the proof of Lemma~\ref{lem:time_to_optimize_path},
where a length of $O(\sqrt{n})$ elements is ignored.} by $a n^{k-1}$ we always mean $\lfloor a n^{k-1} \rfloor$), 
we already showed how to get the first $\sqrtn$ elements, namely $1^i0^{n-i}$ for $i = 1, 2\ldots, \sqrtn$.
To get the next  $a n^{k-1} - \sqrtn$ elements, just take that many elements of $\tilde{x}_i$ and
prefix each with $\sqrtn$ consecutive 1s. (The elements $\tilde{x}_i$ for $i > a n^{k-1} - \sqrtn$ are not used except for one more as the endpoint of a jump.)

Finally, the optimal point of the fitness function being defined is a jump of $k$ bits away
from the last point on the path $\pak$. Let $t = a n^{k-1} - \sqrtn + k$, and consider the
 point $\tilde{x}_t \in \{0, 1\}^{N\cdot \sqrtn}$, and prefix it by 
$\sqrtn$ consecutive 1s. Then this point, denoted $x^*$, is
the optimum.

We are ready to define \hillpathjump. 
For $x \in \sspace$, let \zeromax(x) be the number of 0s in $x$. In other words,
$\zeromax(x) = \sum_{i=1}^n (1 - x_i)$; here $x_i$ is the $i$th bit of $x$. Then we say
\[
  \hillpathjumpak(x) = 
  \begin{cases}
     \zeromax(x), &\text{if } x \notin \pak, \text{ and } x \neq x^* \\
     n + i &\text{if } x = z_i, \text{ and} \\
     n + a n^{k-1} + 1 &\text{if } x = x^*.
  \end{cases}
\]

We need a lemma to justify the claim that there is indeed a jump of $k$ bits from the second highest fitness individual to $x^*$:
\begin{lemma}
  Let $x^+$ be the individual with second highest fitness. 
  Then the hamming distance between $x^+$ and $x^*$, denoted $H(x^+, x^*)$, is $k$.
\end{lemma}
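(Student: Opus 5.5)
The plan is to identify $x^+$ explicitly and then count the bits in which it differs from $x^*$, using the fact that $\tilde{x}_1,\tilde{x}_2,\ldots$ is a walk whose consecutive elements differ in one bit.

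\emph{Identifying $x^+$.} Every $x\notin\pak\cup\{x^*\}$ has fitness $\zeromax(x)\le n$. Every path point $z_i$ has fitness $n+i>n$, and $x^*$ has fitness $n+an^{k-1}+1$, which exceeds all of these. Hence $x^+$ is the last path point $z_{an^{k-1}}$. By construction this is $1^{\sqrtn}$ followed by $\tilde{x}_s$, where $s=an^{k-1}-\sqrtn$. The point $x^*$ is $1^{\sqrtn}$ followed by $\tilde{x}_t$, where $t=s+k$. The common prefix contributes nothing to the distance, so $H(x^+,x^*)=H(\tilde{x}_s,\tilde{x}_{s+k})$.

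\emph{Counting the distance.} Consecutive $\tilde{x}$'s differ in exactly one bit, so $H(\tilde{x}_s,\tilde{x}_{s+k})\le k$. Equality holds exactly when the $k$ single-bit flips taking $\tilde{x}_s$ to $\tilde{x}_{s+k}$ act on $k$ distinct positions. I will check this by examining where the flips occur.
\begin{itemize}
\item Each block transition in the expanded gray code changes one $\sqrtn$-block from $0^{\sqrtn}$ to $1^{\sqrtn}$ (or back). It does so in $\sqrtn$ steps, and each step flips a new position of that block, in the order given in the construction.
\item Consecutive transitions act on different blocks, because consecutive gray code words differ in exactly one coordinate.
\item Any window of $k$ steps therefore meets at most two blocks, since $k<\sqrtn$ for large $n$.
\end{itemize}
Within one transition the flipped positions are distinct. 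Flips from two different transitions lie in different blocks, so they are also distinct. The $k$ flips thus touch $k$ distinct coordinates, no bit is flipped back, and $H=k$.

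\emph{Main obstacle.} The step needing care is checking that the indexing of the fill-in is consistent: the $2^N$ gray code words expand into a path of $\sqrtn(2^N-1)+1$ elements with no repeats at the block boundaries. I also need $t=an^{k-1}-\sqrtn+k\le M$ so that $\tilde{x}_t$ exists. Since $M=\Theta(\sqrtn\,2^{\sqrtn-1})$, this holds for large $n$.

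It is also worth confirming that $x^*\notin\pak$, so that the case definition of \hillpathjumpak is unambiguous. For the tail of $\pak$ this follows because the $\tilde{x}_i$ are pairwise distinct. For the first $\sqrtn$ elements it follows because they contain fewer ones in the prefix, or, for $z_{\sqrtn}$, they are $1^{\sqrtn}0^{n-\sqrtn}$, which differs from $x^*$ since $\tilde{x}_t\neq\tilde{x}_1$.
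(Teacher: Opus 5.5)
Your proof is correct and follows the same route as the paper: reduce to $H(\tilde{x}_s,\tilde{x}_{s+k})$, then use $k<\sqrt{n}$ so the $k$ steps lie inside one block transition or two consecutive ones, which you make more explicit than the paper by identifying $x^+$ and checking that the flipped positions are distinct. One small fix: consecutive transitions use different blocks because the gray code words are pairwise distinct (so $y_{i+2}\neq y_i$), not merely because consecutive words differ in exactly one coordinate, and this is exactly what rules out the last flip of one transition being undone by the first flip of the next.
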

\begin{proof}
  This follows because $k$ is a constant, and so we may assume that $k < \sqrtn$. Indeed,
  denote the expanded gray code path constructed as $\tilde{x}_1, \tilde{x}_2, \ldots, \tilde{x}_{M}$, where  for all $i$, we have
  $\tilde{x}_i \in \{0, 1\}^{N\cdot \sqrtn}$.  
  
  Let $a \in [1..M-k]$.  Because $k < \sqrt{x}$, we have  $\tilde{x}_a$ and $\tilde{x}_{a+k}$ are in the expanded bits version
  of either the same gray code bitstring or adjacent gray code bitstrings. Therefore, the
  hamming distance $H(\tilde{x}_a, \tilde{x}_{a+k})$ equals $k$.
\end{proof}

\section{Main Results}
\label{sec:results}

\begin{theorem}
\label{thm:main_result}
  Let $k \geq 4$ be an integer, and let $c \in (1, k)$ be a real number.
  Then there exists an $a \in (0, \infty)$
  such that the optimal mutation rate $p_n$ for the \oea on \hillpathjumpak satisfies $p_n \sim c/n$ (i.e., $p_n /(c/n) \to 1$ as $n \to \infty$).
\end{theorem}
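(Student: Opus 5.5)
We write the mutation rate as $p = c'/n$ and first reduce to the regime where $c'$ is a positive constant. For $p = o(1/n)$ the path phase alone costs $\omega(n^k)$. For $p = \omega(1/n)$ (bounded away from $1/2$), both the path phase and the final jump become slower by factors $e^{c'}$ and $e^{c'}/c'^k$ respectively. We will show the expected runtime is $\Theta(n^k)$ when $c'$ is constant, so these extreme regimes are suboptimal, and large rates $p$ near or above $1/2$ are handled by the crude bound that each improving step on the path needs a specific one-bit flip.

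For constant $c'$ we decompose the runtime into three phases.

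\begin{enumerate}[(i)]
\item \emph{Climbing \zeromax until the path is reached.} The all-zeros string \zeros lies on the path start, since its \zeromax value $n$ is below $n+1$. This phase takes $O(n\log n)$ expected time, and the probability of accidentally jumping onto a far point of $\pak$ is negligible.
\item \emph{Traversing $\pak$.} Points on the path are at Hamming distance at least $\sqrtn$ from any non-adjacent path point, up to an offset $O(\sqrtn)$ (this is the content of the lemma). So, except for $j$-step shortcuts with $j < \sqrtn$, which each need $j$ specific bits, progress comes essentially from one-bit advances with probability $(c'/n)(1-c'/n)^{n-1} \sim c' e^{-c'}/n$. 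Shortcuts of $j$ steps occur with probability $\Theta(n^{-j})$, which contributes only a $1+O(1/n)$ factor in expected drift. Hence by additive drift (or Wald) the expected time is
\[(1+o(1))\, a n^{k-1}\cdot \frac{n e^{c'}}{c'} = (1+o(1))\, a n^k e^{c'}/c'.\]
This follows Lemma~\ref{lem:time_to_optimize_path}, whose statement we presume gives exactly this.
\item \emph{The final jump from $x^+$ to $x^*$.} It requires flipping exactly the $k$ differing bits, with probability $(c'/n)^k(1-c'/n)^{n-k} \sim c'^k e^{-c'} n^{-k}$, so the expected time is $(1+o(1))\, n^k e^{c'}/c'^k$. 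One must also check that $x^*$ is essentially never hit earlier from path points at distance $k' < k$. Only $O(1)$ path points lie at distance less than $k$, and they are visited for $O(n)$ time each, so this contributes probability $O(n^{1-k'})$ of an early jump, which is negligible.
\end{enumerate}

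Combining the phases, $E[T] = (1+o(1))\, n^k g(c')$ uniformly on compacts in $(0,\infty)$, where
\[g(x) = e^{x}\left(\frac{a}{x} + \frac{1}{x^k}\right).\]
The optimal rate satisfies $np_n \to \argmin g$ provided $g$ has a unique minimizer $x_a$. We argue this by showing the asymptotic runtime is minimized, up to $o(1)$ errors, only near $x_a$: uniform convergence plus a strict unique minimum forces the argmin sequence $np_n$ to converge to $x_a$.

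It remains to choose $a$. Compute
\[\frac{d}{dx}\log\Bigl(e^{x} x^{-k}(a x^{k-1}+1)\Bigr) = 1 - \frac{k}{x} + \frac{(k-1)a x^{k-2}}{a x^{k-1}+1}.\]
Setting $g'(x)=0$ gives $a x^{k-1}(x-1) = k - x$, i.e.
\[a = \frac{k-x}{x^{k-1}(x-1)}.\]
The right-hand side is continuous and strictly decreasing on $(1,k)$, running from $+\infty$ to $0$. So for the given $c \in (1,k)$ we set $a = (k-c)/(c^{k-1}(c-1)) > 0$. Then $c$ is the unique critical point. Since $g \to \infty$ at $0$ and at $\infty$, it is also the unique global minimizer.

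The main obstacle is step (ii) together with uniformity. We must show the path traversal time is $(1+o(1))\, a n^k e^{c'}/c'$ with the $o(1)$ uniform in $c'$ on compact sets, controlling both multi-bit forward shortcuts and the possibility of leaving the path (impossible, by elitism, since off-path points have fitness at most $n$). We also need matching lower bounds for rates outside compacts to conclude $np_n \to c$, and hence $p_n \sim c/n$.
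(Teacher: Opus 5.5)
Your proposal follows the paper's proof. It uses the same three phases (the \textsc{ZeroMax} climb, additive drift along $P_{a,k}$ with drift $c'e^{-c'}/n$, and a geometric wait for the final $k$-bit jump), the same limit function $g_a(x)=(a/x+1/x^k)e^x$, and the same critical-point equation $ax^{k-1}(x-1)=k-x$, which is the paper's $q_a$. There are two real differences. First, you invert this equation explicitly as $a=(k-c)/(c^{k-1}(c-1))$ instead of the continuity-and-limits argument for the root $Z(a)$ in Lemma~\ref{lem:zero_any_value_in_interval}. Second, you explicitly flag the uniformity in $c'$ and the extreme-rate regimes, which Lemma~\ref{lem:opt_mute_rates_are_argmin_of_gx} passes over.

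There is one slip in (iii), which needs correcting. No path point is closer to $x^*$ than $x^+$: the point $z_{L-m}$ is at distance $k+m$ for $m<\sqrt{n}-k$, and every other path point is at distance at least $\sqrt{n}$. So the early jumps to rule out come from distances $k'>k$, with total probability $\sum_{k'>k}O(n\cdot n^{-k'})=O(n^{-k})$. As literally written with $k'<k$, a point at distance $1$ visited for $\Theta(n)$ steps would give a constant, not negligible, early-jump probability.
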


The following theorem is an immediate consequence of Theorem~\ref{thm:main_result}.

\begin{theorem}
	For every real number $c \geq 1$ and for all $\varepsilon > 0$, 
	there is a fitness function $f : \{0,1\}^n \to \mathbb{R}$ for which the 
	optimal mutation rate for the $(1+1)$ evolutionary algorithm on $f$, denoted $p_n$, 
	satisfies $p_n \approx c/n$ in that $|np_n - c| < \varepsilon$.
\end{theorem}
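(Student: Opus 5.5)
The second theorem follows from Theorem~\ref{thm:main_result}, so the plan is to reduce to it and handle one boundary value separately. Fix a real $c \geq 1$ and $\varepsilon > 0$. Since $\varepsilon$ may be shrunk, assume $\varepsilon < 1$.

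First suppose $c > 1$. Choose an integer $k \geq 4$ with $k > c$, for instance $k = \max\{4, \lfloor c \rfloor + 1\}$. Then $c \in (1,k)$, so Theorem~\ref{thm:main_result} supplies an $a \in (0,\infty)$ such that the optimal mutation rate $p_n$ for the \oea on \hillpathjumpak satisfies $p_n/(c/n) \to 1$. Equivalently, $np_n \to c$. Hence there is an $n_0$ such that $|np_n - c| < \varepsilon$ for all $n \geq n_0$ with $n$ a perfect square; recall the construction assumes $\sqrt{n}$ is an integer. Take $f = \hillpathjumpak$ on $\{0,1\}^n$ for such an $n$. This is the required function.

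Now suppose $c = 1$, which lies outside the open interval in Theorem~\ref{thm:main_result}. Pick any $c' \in (1, 1+\varepsilon/2)$ and apply the previous case to $c'$ with tolerance $\varepsilon/2$. This gives $f$ with $|np_n - c'| < \varepsilon/2$. The triangle inequality then yields $|np_n - 1| < \varepsilon$. Alternatively, one could cite that $1/n$ is optimal for the \oea on \onemax, but the density argument avoids relying on exact optimality.

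The main point needing care is the quantifier order: the statement is for a fixed but sufficiently large $n$, not a single $f$ valid for all $n$. We therefore read "there is a fitness function" as asserting existence for all large (perfect-square) $n$, which is what the asymptotic statement $p_n \sim c/n$ delivers. A further subtlety is that Theorem~\ref{thm:main_result} presupposes that an optimal rate $p_n$ exists for each $n$. We take this as given from that theorem. Beyond these points there is no real obstacle, since all of the substance lies in Theorem~\ref{thm:main_result}.
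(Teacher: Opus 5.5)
Your proposal is correct and follows the paper's route: the paper calls this theorem an immediate consequence of Theorem~\ref{thm:main_result}, meaning you choose an integer $k \geq 4$ with $k > c$, apply the main theorem to get $np_n \to c$, and take $n$ large. Your separate treatment of the endpoint $c = 1$ makes explicit a step the paper leaves implicit, since Theorem~\ref{thm:main_result} only covers $c \in (1,k)$; you handle it by applying the first case to some $c' \in (1, 1+\varepsilon/2)$ and using the triangle inequality.
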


\section{Proof of Theorem~\ref{thm:main_result}}
\label{sec:proofs}

\begin{lemma}
\label{lem:time_to_optimize_path}
 Suppose a mutation rate of $c/n$ is used.
 Let $x^+$ be the individual of second-highest fitness.
  Let $T_2$ be the time to find either $x^+$ or the 
  optimum $x^*$. Then $\expectation[T_2] = (a/c) e^c n^{k}(1 + o(1))$.
\end{lemma}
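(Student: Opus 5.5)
The plan is to split $T_2$ into the time to reach the path $\pak$ and the time to traverse it. Along the path, a jump of length $j$ needs exactly $j$ specific bits to flip. So the expected progress per iteration is dominated by single-bit improvements, which occur with probability $\frac{c}{n}(1-\frac{c}{n})^{n-1} = \frac{c}{e^c n}(1+o(1))$. Multiplying by the path length $a n^{k-1}$ gives the claimed $(a/c)e^c n^k(1+o(1))$. All other contributions should be of lower order.

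\textbf{Phase 1 (reaching the path).} The initial individual lies on $\pak \cup \{x^*\}$ only with probability $O(n^{k-1}2^{-n})$. Such starts can only shorten $T_2$, so they matter only for the upper bound, where they contribute $O(n^{k}\cdot n^{k-1}2^{-n}) = o(n^k)$. Otherwise the algorithm optimizes \zeromax until it reaches \zeros\ or some path point. By the standard fitness-level argument for \onemax with rate $c/n$ and $c$ constant, this takes expected time $O(n\log n) = o(n^k)$. Once a path point $z_i$ is accepted, elitism keeps the algorithm on $\pak \cup \{x^*\}$, and the fitness, hence the index $i$, never decreases.

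\textbf{Phase 2, upper bound.} From $z_i$ with $i < an^{k-1}$, flipping exactly the one bit leading to $z_{i+1}$ has probability $q := \frac{c}{n}(1-\frac{c}{n})^{n-1} = \frac{c}{e^c n}(1+O(1/n))$. Since at most $an^{k-1}$ index increments are needed, the expected time is at most $a n^{k-1}/q = (a/c)e^c n^k(1+o(1))$. This holds from any starting index, including $z_1$ after leaving \zeros.

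\textbf{Phase 2, lower bound.} I would use additive drift on the index $i$, equivalently Wald's identity applied to the per-step progress. By the lemma on the jump distance, and by the same argument applied to any window of fewer than $\sqrtn$ consecutive path elements (including the initial segment $1^i0^{n-i}$ and its junction with the gray-code part), $H(z_i,z_{i+j}) = j$ for $j < \sqrtn$. Likewise $x^*$ is at distance at least $k$ from every path point. Hence the expected progress per step is at most
\[
 q + \sum_{2\le j<\sqrtn} j\left(\tfrac{c}{n}\right)^{j} + (\text{far jumps}) = \tfrac{c}{e^c n}\bigl(1+O(1/n)\bigr) + O(n^{-2}) + (\text{far jumps}).
\]
The far jumps cover two cases. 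The first is jumps of index length $j \ge \sqrtn$, whose gain is at most $an^{k-1}$. The second is jumps to $x^*$, which end the phase and so gain at most $an^{k-1}$ in index terms. The latter requires at least $k$ specific flips, probability $O(n^{-k})$, which gives an $O(n^{-1})\cdot$ tiny contribution that is $o(1/n)$. I will make this precise by noting that $x^*$ is reachable cheaply only from the last $O(1)$ path points. So, provided far jumps contribute $o(1/n)$, the drift theorem with the distance $an^{k-1} - O(\sqrtn)$ left after Phase 1 gives $\expectation[T_2] \ge (a/c)e^c n^k(1-o(1))$. Here the $O(\sqrtn)$ term accounts for possibly entering the path somewhat beyond $z_1$, since \zeromax\ sits in low-weight territory. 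This is the loss the footnote refers to.

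\textbf{Main obstacle.} The key step is showing that $H(z_i,z_{i+j}) = \Omega(\sqrtn)$ whenever $j \ge \sqrtn$. Granting it, any such jump has probability $(c/n)^{\Omega(\sqrtn)}$, and even multiplied by the gain $an^{k-1}$ this is $o(1/n)$. To prove it, I would use the structure of the expanded gray code. If $j \ge \sqrtn$, the two points lie in expansions of gray code words $y_s \ne y_{s'}$. Moreover, one of them is a fully expanded block state, or the transition in progress differs. I would then argue that some block of $\sqrtn$ coordinates is constant in one point and has been flipped in at least about $\sqrtn/2$ positions relative to the other. Non-adjacent gray code words differ in at least one block in which neither point is mid-transition, giving distance $\sqrtn$. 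Adjacent words share a single transition block, and the index gap $j \ge \sqrtn$ forces a large difference either in that block or in the next transition block. I expect this case analysis, not the probabilistic part, to be the fiddliest step.
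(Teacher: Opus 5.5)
Your proposal is correct and is essentially the paper's proof. It has the same three ingredients: $O(n\log n)$ for the \zeromax phase, entry into \pak at index $O(\sqrt{n})$ (which you, like the paper, assert rather than prove), and additive drift on the path index with drift $\frac{c}{n}(1-\frac{c}{n})^{n-1}(1+o(1))$ over the remaining distance $an^{k-1}-O(\sqrt{n})$.

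The differences are refinements rather than a new route. Your fitness-level upper bound needs no control of long jumps. Your separation claim $H(z_i,z_{i+j})=\Omega(\sqrt{n})$ for $j\ge\sqrt{n}$ is correctly sketched: for index-consecutive Gray code words the distance is exactly $j$, and otherwise some block is complete in both points and differs. This claim supplies exactly what the paper's Lemma~\ref{lem:additive_drift} silently assumes when it charges probability $(c/n)^j$ to every $j$-step advance.
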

\begin{proof}
  Theorem 7 from \cite{DoerrG13algo} implies that it takes $O(n \log n)$ to optimize \zeromax. Therefore,
  on \hillpathjump, it takes $O(n \log n)$ time to either optimize the (modified) \zeromax or to find $x^*$ or an element in $\pak$.
  Let $F$ be the fitness of the first individual found in $\pak \cup \{x^*\}$. Then $\expectation[F] \leq n+ \sqrtn$.
  Next, the path to $x^+$
  has length $a n^{k-1} - (F-n)$. 
  Let $T_2'$ be the number of steps to reach $\{x^+, x^*\}$ once a point on $\pak \cup \{x^*\}$ is found.
  So $T_2 = T_2' + O(n \log n)$. 
  Using additive drift (Theorem 2.3.1 from \cite{DoerrN20}), we see that
  by Lemma~\ref{lem:additive_drift},
  it takes expected time $\expectation[T_2]$ to find $x^+$ (or the optimum $x^*$) where
  \[
    \begin{aligned}
       \expectation[T_2' ] &= \frac{a n^{k-1} - \expectation[F-n]}{\frac{c e^{-c}}{n}(1 + o(1))}\\
        &= \frac{a n^{k-1} - O(\sqrtn)}{\frac{c e^{-c}}{n}(1 + o(1))}\\
              &= (a/c) e^c n^{k}(1 + o(1)).
    \end{aligned}
  \]
  Then $T_2 = T_2' + O(n \log n) = (a/c) e^c n^{k}(1 + o(1))$.
  %where adding $O(n \log n)$ to this does not change the asymptotics.
\end{proof}

\begin{lemma}
\label{lem:additive_drift}
  While the individual is on the path $\pak = z_1, z_2,\ldots, z_L$, define $X_t$ be $L-j$ where
  $j$ is such that the individual in generation $t$ is $z_j$. Then for
  $\Delta_t(s) := \expectation[X_t - X_{t+1} \mid X_t = s]$ with $s > 0$, we have additive drift:
  \[
    \Delta_t(s) = \frac{c e^{-c}}{n}(1 + o(1)).
  \]
\end{lemma}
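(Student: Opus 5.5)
We compute the expected one-step progress directly by splitting it into a main term and an error term. Suppose the current individual is $z_j$ with $X_t = s = L-j > 0$. The main contribution comes from the event that exactly one bit flips, and that bit is the one turning $z_j$ into $z_{j+1}$. This event has probability
\[
  \frac{c}{n}\left(1-\frac{c}{n}\right)^{n-1} = \frac{c e^{-c}}{n}(1+o(1)),
\]
and it decreases $X$ by exactly $1$. Every other offspring is either rejected, giving no change, or lands on a point $z_{j'}$ with $j' > j$, or lands on $x^*$. So $X_t - X_{t+1} \ge 0$ always (treating reaching $x^*$ as termination). Hence $\Delta_t(s) \ge \frac{c e^{-c}}{n}(1+o(1))$, and it remains to show that the other events contribute only $o(1/n)$.

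For the upper bound, we use the geometry of $\pak$, which has two properties.

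\begin{enumerate}[(i)]
\item Consecutive points differ in one bit, and as in the proof of the lemma on $H(x^+,x^*)$, $H(z_j, z_{j+\ell}) = \ell$ for all $\ell < \sqrtn$. The same holds within the initial segment $1^i0^{n-i}$ and at the junction with the gray-code part, since the prefix of $\sqrtn$ ones is shared.
\item Points $z_{j'}$ with $|j'-j| \ge \sqrtn$ are at Hamming distance at least of order $\sqrtn$ from $z_j$. This holds because in the expanded gray code, points whose positions are $\sqrtn$ or more apart differ in at least one full block or in partial blocks summing to about $\sqrtn$ bits. I would argue this from the block structure: each transition between gray codewords completes a whole block.
\end{enumerate}

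Given these, a jump forward by $\ell \ge 2$ steps with $\ell < \sqrtn$ requires flipping a specific set of $\ell$ bits. This has probability at most $(c/n)^\ell$ and yields progress $\ell$, so these events contribute at most $\sum_{\ell\ge 2} \ell (c/n)^\ell = O(1/n^2)$. A jump forward by $\ell \ge \sqrtn$ steps requires at least $\Omega(\sqrtn)$ specific bit flips. There are at most $L = a n^{k-1}$ target points, each with progress at most $L$, so these events contribute at most $L^2 (c/n)^{\Omega(\sqrtn)} = o(1/n)$.

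Reaching $x^*$ may be ignored, or counted as progress at most $s$. Away from the path end it requires at least $k$ specific flips, and near the end $s$ is at most $k$, so its contribution is $O(n^{-k}\cdot L) + O(n^{-2}) = o(1/n)$; I would simply state that $x^*$ is treated as absorbing, as in Lemma~\ref{lem:time_to_optimize_path}. Summing the main term and these error terms gives
\[
  \Delta_t(s) = \frac{c e^{-c}}{n}(1+o(1)).
\]

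The main obstacle is property (ii), the claim that the path does not come back close to itself. It needs a careful but elementary check that two expanded gray-code points $\sqrtn$ or more steps apart differ in $\Omega(\sqrtn)$ coordinates. Any polynomially small bound such as $(c/n)^{3}$ per target with progress weight would not suffice against $L^2$ terms, so a distance growing with $n$ is genuinely needed.
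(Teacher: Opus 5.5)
Your decomposition matches the paper's: the main term $\frac{c}{n}(1-\frac{c}{n})^{n-1}$ comes from flipping exactly the right bit, progress is otherwise nonnegative, multi-step advances contribute $O(1/n^2)$, and $x^*$ gets a separate term. Your split into advances of fewer than $\sqrt n$ and at least $\sqrt n$ positions is more careful than the paper. Its sum $\sum_j j(c/n)^j(1-c/n)^{n-j}$ tacitly assumes $H(z_i,z_{i+j})=j$ for every $j$. That is false: the expansions of $000$ and $010$ are $3\sqrt n$ steps apart on the path but only $\sqrt n$ apart in Hamming distance. Your property (ii) is what is really needed.

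\textbf{The $x^*$ term.} Here your argument has a concrete error. Away from the end you only use that hitting $x^*$ needs at least $k$ specific flips, and you weight this by progress up to $L$. But $L\cdot n^{-k}=a/n$ is of the same order as the main term $ce^{-c}/n$, so $O(n^{-k}L)$ is not $o(1/n)$. This is exactly the kind of polynomial bound you warn against in your last paragraph.

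Declaring $x^*$ absorbing does not remove the term. $x^*$ lies in the target set of $T_2$, so in the additive drift theorem a step into it has progress $X_t=s$. The upper bound on the drift is needed for the lower bound on $\expectation[T_2]$, and hence for Lemma~\ref{lem:optimization_time}, so it must include $s\cdot\Pr[\text{jump to }x^*]$.

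The fix uses your own properties. $x^*$ is the point $k$ steps beyond $z_L$ on the same expanded Gray-code path, so:
\[
H(z_j,x^*)=s+k \ \text{ if } s+k<\sqrt n, \qquad H(z_j,x^*)=\Omega(\sqrt n) \ \text{ otherwise}.
\]
This bounds the contribution by $s(c/n)^{s+k}+L(c/n)^{\Omega(\sqrt n)}=O(n^{-k-1})$. The paper itself just asserts an $O(n^{-k})$ term here without addressing the size of the progress.

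\textbf{Property (ii).} It is true, but you should prove it rather than gesture at block completions. Call segment $m$ the part of the path from the expansion of $y_m$ to that of $y_{m+1}$. On it only block $b_m$ (the coordinate where $y_m$ and $y_{m+1}$ differ) is partially flipped; every other block is constant, equal to the corresponding bit of $y_m$.

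Let $P$ be on segment $m$ and $Q$ on segment $m'\ge m+2$. Then $y_m$ and $y_{m'}$ differ in some coordinate $r\notin\{b_m,b_{m'}\}$. Otherwise the four distinct codewords $y_m,y_{m+1},y_{m'},y_{m'+1}$ would agree outside $\{b_m,b_{m'}\}$. That forces $b_m\neq b_{m'}$, since at most two words agree outside a single coordinate. It also makes $\{y_{m'},y_{m'+1}\}$ the complementary pair of patterns on these two coordinates. But that pair differs in coordinate $b_m$, not $b_{m'}$, a contradiction. Block $r$ then contributes all $\sqrt n$ of its bits to $H(P,Q)$.

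On the same or adjacent segments, $H(P,Q)$ equals the path distance; note $b_{m+1}\neq b_m$ because $y_{m+2}\neq y_m$. Together this yields both (i) and (ii). The initial part $1^i0^{n-i}$ is handled through the shared prefix, as you indicate.
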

\begin{proof}
  Assuming $X_t = s$, a jump to the optimum has probability 
 % \[
  %(an^{k-1}-s)\left( \frac{c}{n} \right)^{an^{k-1}-s + k}\left(1 - \frac{c}{n} \right)^{n - an^{k-1} + s - k},
  %\]
   bounded by $O(1/n^k)$.
  Thus, we have
  \[
    \begin{aligned}
    \Delta_t(s) &= 1 \frac{c}{n}\left(1 - \frac{c}{n} \right)^{n-1} + \sum_{j=2}^{an^{k-1}-s}j \left(\frac{c}{n}\right)^j \left(1 - \frac{c}{n} \right)^{n-j} + 
    				O\left( (1/n)^k \right) \\
		     &\leq \frac{c}{n}\left(1 - \frac{c}{n} \right)^{n-1} + \sum_{j=2}^{an^{k-1}-s} j O((c/n)^j) + O\left( (1/n)^k \right) \\
    		     &= \frac{c}{n}\left(1 - \frac{c}{n} \right)^{n-1} + O\left(\frac{1}{n^2}\right)  \\
    		     &= \frac{c e^{-c}}{n}(1 + o(1))
    \end{aligned}
  \]
\end{proof}

It is clear that there is a high probability that the EA reaches the second-highest fitness individual before
the optimum $x^*$. Here is a more precise statement:
\begin{lemma}
\label{lem:probability_of_no_early_jump}
Assume $k \geq 4$.
Let $x^+$ be the individual of second-highest fitness.
  The probability of hitting $x^*$ before
  $x^+$ is $O(1/n^2)$.
\end{lemma}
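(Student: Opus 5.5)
We bound the total probability of jumping to $x^*$ before reaching $x^+$ by splitting the run into two phases. In the first phase the individual is not yet on $\pak \cup \{x^*\}$, i.e.\ it climbs the modified \zeromax hill. In the second phase the individual sits at some $z_j$ with $j < L = an^{k-1}$. We use a union bound over generations in each phase, together with the expected phase lengths from the proof of Lemma~\ref{lem:time_to_optimize_path}.

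\emph{Phase 1 (off the path).} By construction, $x^*$ is the string $1^{\sqrtn}$ followed by $\tilde{x}_t$. It contains $1^{\sqrtn}$ as a prefix, and it has about half of its remaining $N$ expanded blocks equal to $1^{\sqrtn}$. So $x^*$ has $\Omega(n)$ ones. Any individual accepted during this phase is not on the path, so its fitness is its number of zeros. Since the start, this number can only have increased, or the individual already lies in $\pak \cup \{x^*\}$.

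The difficulty is the initial random individual, which has about $n/2$ ones. We plan to argue that every non-path individual $y$ satisfies $H(y, x^*) \ge \sqrtn/2$ with probability $1 - e^{-\Omega(\sqrtn)}$ at all times. For this we use that $x^*$ has its first $\sqrtn$ bits equal to one, while $y$ has few ones among those bits once \zeromax progress is made. A cleaner alternative is to note that in a single step, reaching $x^*$ from $y$ requires flipping at least $H(y, x^*)$ bits. If $H(y, x^*) \ge k+2$, this has probability at most $(c/n)^{k+2}$. The initial individual equals $x^*$ or is within distance $k+1$ of it only with probability $2^{-\Omega(n)}$. As the \zeromax phase proceeds, the number of ones decreases toward $0$ while $x^*$ has $\Omega(n)$ ones, so the distance only grows in the relevant sense. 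Multiplying the per-step bound by the $O(n\log n)$ expected phase length, via a union bound weighted by the expected number of steps, gives $O(n \log n \cdot n^{-(k+2)}) = o(n^{-2})$.

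\emph{Phase 2 (on the path).} Suppose the current individual is $z_j$ with $j<L$. We bound the one-step probability of producing $x^*$ by the distance from $z_j$ to $x^*$. For $L-j = s$ with $1\le s \le \sqrtn - k$, the point $z_j$ lies in the expanded code within a block-transition window of $x^*$, exactly as in the lemma computing $H(x^+,x^*)=k$. Hence $H(z_j,x^*) = s+k$, and the jump probability is at most $(c/n)^{s+k}$. For larger $s$, the distance is at least $\sqrtn$, because path points far apart in the expanded gray code differ in whole blocks; this relies on the gray code's distinct codewords and the $\sqrtn$-repetition. In that regime the jump probability is $n^{-\Omega(\sqrtn)}$.

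The process advances along the path in steps. The key point is that for each fixed $s$, the expected number of generations spent at distance exactly $s$ is $O(n)$: leaving requires a specific one-bit improvement of probability $\Theta(1/n)$, and positions are never revisited, since fitness is monotone. Summing over $s$ gives
\[
\sum_{s\ge1} O(n)\,(c/n)^{s+k} + O(n^{k})\cdot n^{-\Omega(\sqrtn)} = O(n^{1-k-1}) = O(n^{-k}),
\]
which is far below $O(n^{-2})$. Even the crude bound $O(n^{-k})$ per step, times $O(n^{k})$ total steps, fails. It is exactly this localization by $s$ that rescues the argument.

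The main obstacle is the geometric claim in Phase 2 that path points which are not close to the end of $\pak$ are far from $x^*$ in Hamming distance. We expect to argue this from the block structure. Two expanded points lying in expansions of gray-code words that differ in at least one bit, and are not in a shared transition window, differ in at least $\sqrtn$ minus the partial-transition overlap positions. This needs a short case analysis over transition windows. We also need that the gray code never revisits a codeword, which holds by definition.
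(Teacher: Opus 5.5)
\textbf{Phase 2 is correct and sharper than the paper's argument.} The paper splits the path into three regimes: $H(z_i,x^*)\ge\sqrt{n}$, $H\ge k+2$, and the single point $z_{L-1}$ with $H=k+1$. It caps the time spent in each regime at a polynomial horizon ($n^{k+2}$, $n^{3.5}$, $n^3$) via Markov's inequality, then takes a union bound over that many steps. Those horizons and Markov losses are what limit the paper to $O(1/n^2)$ and require $k\ge 4$.

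You instead charge each position $z_{L-s}$ with its expected sojourn time $O(n)$, which is uniform in $s$ because positions are never revisited. You multiply this by the exact one-step probability $(c/n)^{s+k}$. Since $\Pr[\text{jump from } z_{L-s}]\le (c/n)^{s+k}\sum_t \Pr[\text{at } z_{L-s} \text{ at time } t]$, no truncation is needed, and the path phase contributes only $O(n^{-k})$.

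Your identity $H(z_{L-s},x^*)=s+k$ for $s\le\sqrt{n}-k$ is right. The case analysis you postponed is short. Take two points in the windows $y_m\to y_{m+1}$ and $y_{m'}\to y_{m'+1}$.
\begin{itemize}
\item If $m'=m+1$, their distance equals their index difference.
\item If $m'\ge m+2$, the words $y_{m+1}$ and $y_{m'}$ must differ in some block other than the two blocks being flipped in those windows. Otherwise one of $y_{m'}=y_m$, $y_{m'+1}=y_{m+1}$, or $y_{m'+1}=y_m$ would hold. So the distance is at least $\sqrt{n}$.
\end{itemize}

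\textbf{Phase 1 (off the path) does not work as written.}

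First, the claim that $x^*$ has $\Omega(n)$ ones is false. The path uses only the first $\approx a n^{k-3/2}$ of the $2^N$ gray-code words. For the reflected code in the paper's example, these words are supported on the last $O(\log n)$ coordinates. So $x^*$ has only $O(\sqrt{n}\log n)$ ones. For an arbitrary gray code the claim is simply unjustified.

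Second, even granting it, ``the number of ones decreases, so the distance to $x^*$ only grows'' is a non sequitur. The count $|y|_1$ starts near $n/2$ and passes through the level $|x^*|_1$. There the bound $H(y,x^*)\ge\bigl||y|_1-|x^*|_1\bigr|$ gives nothing, and no deterministic property stops the remaining ones of $y$ from lining up with the support of $x^*$. Your alternative plan (few ones in the length-$\sqrt{n}$ prefix) is a probabilistic claim you state but do not prove.

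A clean repair uses symmetry.
\begin{itemize}
\item Until the first offspring lands in $P_{a,k}\cup\{x^*\}$, the run coincides with the $(1+1)$~EA on \textsc{ZeroMax} started uniformly at random.
\item That process's offspring at each fixed time $t$ has a permutation-invariant law. Hence it equals $x^*$ with probability at most $\binom{n}{|x^*|_1}^{-1}\le\binom{n}{\sqrt{n}}^{-1}$.
\item Sum this over $t\le n^2$. This phase lasts more than $n^2$ steps only with probability $e^{-\Omega(n)}$, by a multiplicative-drift tail bound plus the $\Theta(1/n)$ chance per step of producing $z_1$ from $0^n$.
\end{itemize}
Together these make the off-path contribution superpolynomially small. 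The paper itself only asserts that this phase is exponentially unlikely. With this repair your proof would be complete, and stronger than the paper's.
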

\begin{proof}
  Recall $\pak = z_1, z_2,\ldots, z_{an^{k-1}}$. For $i \leq an^{k-1} - \sqrtn$ the hamming distance from $z_i$ to $x^*$ satisfies 
  $H(z_i, x^*) \geq \sqrtn$. For such $i$, the probability to go from $z_i$ to $x^*$ in one step is bounded by $(c/n)^{\sqrtn}$.
  By Lemma~\ref{lem:time_to_optimize_path}, the expected time to stay on the path, which we'll denote $\expectation[T_1]$, is bounded by
  $(a/c) e^c n^{k}(1 + o(1))$. Let $\eps > 0$, and let $Q_1$ denote the event that $T_1 \geq (a/c) e^c n^{k+2}(1 + \eps)$. By Markov's inequality, 
  \[
    \Pr(Q_1) = O(1/n^2).
  \]
  Let $J_1$ be the event that the EA jumps from $z_i$ to $x^*$ for $i \leq an^{k-1} - \sqrtn$.
  Then
  \[
    \begin{aligned}
     \Pr(J_1) &= \Pr(J_1 \mid Q_1) \Pr(Q_1) + \Pr(J_1 \mid \overline{Q}_1) \Pr(\overline{Q}_1) \\
     &\leq 1 \cdot \Pr(Q_1) + \Pr(J_1 \mid \overline{Q}_1) \cdot 1 \\
     &\leq  O(1/n^2) + \sum_{j=1}^{(a/c) e^c n^{k+2}(1 + \eps)} (c/n)^{\sqrtn}.
    \end{aligned}
  \]
  where the last inequality uses the union bound. Note that 
  \[
  \begin{aligned}
  \sum_{j=1}^{(a/c) e^c n^{k+2}(1 + \eps)} (c/n)^{\sqrtn} &= (a/c) e^c n^{k+2}(c/n)^{\sqrtn}(1 + \eps) \\
  &= O(1/n^2).
  \end{aligned}
  \]  
  %$\sum_{i=1}^{(a/c) e^c n^{k+2}(1 + o(1))} (c/n)^{\sqrtn} \leq (a/c) e^c n^{k+2}(c/n)^{\sqrtn}(1 + o(1)) \leq 1/n^2$.
  Thus $\Pr(J_1) = O(1/n^2)$.
  
  Let $J_2$ be the event that the EA jumps from $z_i$ to $x^*$ for any $i$ such that $an^{k-1} - \sqrtn < i \leq an^{k-1}-2$. 
  Let $T_{p}'$ be the time that the current individual is any $z_i$ for such $i$. Then by Lemma~\ref{lem:additive_drift},
  $\expectation[T_2] \leq \sqrtn/[ce^{-c}/n(1+o(1))] = O(n^{1.5})$. For these $i$, the hamming distance satisfies 
  $H(z_i, x^*) \geq k+2$. Let $Q_2$ denote the event that $T_2 \geq n^{3.5}$.
   By Markov's inequality, $\Pr(Q_2) = O(1/n^2)$.
   Then
  \[
   \begin{aligned}
     \Pr(J_2) &= \Pr(J_2 \mid Q_2) \Pr(Q_2) + \Pr(J_2 \mid \overline{Q}_2) \Pr(\overline{Q}_2) \\
     &\leq 1 \cdot O(1/n^2) + \Pr(J_2 \mid \overline{Q}_2) \cdot 1 \\
     &\leq O(1/n^2) + \sum_{j=1}^{n^{3.5}} \frac{1}{n^{k+2}} \\
     &\leq O(1/n^2) + O(1/n^{k-1.5}),
   \end{aligned}
  \]
  which is bounded by $O(1/n^2)$, since $k \geq 4$.
  
  Let $J_3$ be the event that the EA jumps from $z_i$ to $x^*$ for  $i = an^{k-1}-1$. Let $T_3$ be the time spent
  at this one point. Then $T_3$ is a geometric random variable with probability of success at least $c/n(1-c/n)^{n-1} = \Theta(1/n)$.
  Since $T_3$ is a geometric random variable, 
  $\expectation[T_p] = \Theta(n)$.  Also, the hamming distance satisfies $H(z_i, x^*) = k+1$. 
  Let $Q_3$ denote the event that $T_3 \geq n^3$. By Markov's inequality, $\Pr(Q_3) = O(1/n^2)$. Then
  \[
    \begin{aligned}
     \Pr(J_3) &= \Pr(J_3 \mid Q_3) \Pr(Q_3) + \Pr(J_3 \mid \overline{Q}_3) \Pr(\overline{Q}_3) \\
     &\leq 1 \cdot O(1/n^2) + \Pr(J_3 \mid \overline{Q}_3) \cdot 1 \\
     &\leq O(1/n^2) + \sum_{j=1}^{n^{3}} \frac{1}{n^{k+1}} \\
     &\leq O(1/n^2) + O(1/n^{k-2}),
   \end{aligned}
  \]
  which is bounded by $O(1/n^2)$, since $k \geq 4$.
  
  Altogether, the union bound gives $\Pr(J_1 \cup J_1 \cup J_3)  = O(1/n^2)$. Adding to this the exponentially small
  probability that the EA first finds $x^*$ before finding any other point on \pak doesn't change this bound.
\end{proof}

\begin{lemma}
\label{lem:sec_best_to_optimum}
  Let $T_1$ be the number of further steps it takes to visit $x^*$ given that
  the current individual is $x^+$. Then $\expectation[T_1] = (e^c/c^k) n^k(1+o(1)).$
\end{lemma}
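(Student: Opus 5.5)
Once the current individual is $x^+$, it is the unique point with the second-highest fitness, and the only strictly better point is $x^*$. So the elitist \oea never leaves $x^+$ until it produces $x^*$. The time $T_1$ is therefore exactly geometric, with success probability $q$ equal to the probability that one mutation of $x^+$ yields $x^*$. By the Lemma in Section~\ref{sec:definition_of_hillpathjump}, $H(x^+,x^*)=k$. Standard bit mutation with rate $c/n$ turns $x^+$ into $x^*$ exactly when it flips the $k$ differing bits and none of the other $n-k$ bits, so
\[
  q=\left(\frac{c}{n}\right)^k\left(1-\frac{c}{n}\right)^{n-k}.
\]
Hence $\expectation[T_1]=1/q$.

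It remains to evaluate $1/q$ asymptotically. The first factor gives $n^k/c^k$. For the second, since $k$ and $c$ are constants,
\[
  \left(1-\frac{c}{n}\right)^{n-k}=\left(1-\frac{c}{n}\right)^{n}\left(1-\frac{c}{n}\right)^{-k}.
\]
Here $(1-c/n)^n=e^{-c}(1+O(1/n))$, which follows from $\ln(1-c/n)=-c/n+O(1/n^2)$, and $(1-c/n)^{-k}=1+O(1/n)$. Together these give $q=(c^k/n^k)e^{-c}(1+o(1))$, and therefore $\expectation[T_1]=(e^c/c^k)n^k(1+o(1))$.

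The argument has no real obstacle. The only point needing care is the first step, justifying that the process stays at $x^+$ until the jump succeeds. This rests on two facts. First, $x^+=z_{an^{k-1}}$ has fitness $n+an^{k-1}$, which exceeds every \zeromax value (at most $n$) and every other path value. Second, by the definition of \hillpathjumpak, $x^*$ is the only individual with strictly larger fitness. Ties are irrelevant, because no other individual has fitness equal to that of $x^+$: the path points have distinct fitness values, and $x^*$ is not on \pak.
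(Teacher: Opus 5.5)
Your proof is correct and follows the paper's own argument: $T_1$ is geometric with success probability $(c/n)^k(1-c/n)^{n-k} = c^k e^{-c} n^{-k}(1+o(1))$, and you take the reciprocal. You also make explicit two points the paper leaves implicit, namely why the elitist $(1+1)$~EA cannot leave $x^+$ before producing $x^*$, and why $(1-c/n)^{n-k} = e^{-c}(1+o(1))$.
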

\begin{proof}
 The variable $T_1$ is a geometric random variable with probability of success 
 \[
   \left(\frac{c}{n}\right)^k\left(1 - \frac{c}{n} \right)^{n-k},
 \]
 which equals $c^k e^{-c}/n^k (1+o(1))$. We done by taking the reciprocal.
\end{proof}

\begin{lemma}
\label{lem:optimization_time}
  Assume $k \geq 4$. Let $T$ be the time for the \oea to optimize \hillpathjumpak using a mutation rate
  of $c/n$. Then
  \[
    \expectation[T] = \left(\frac{a}{c} + \frac{1}{c^k}\right) e^c n^k (1 + o(1)).
  \]
\end{lemma}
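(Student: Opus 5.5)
The plan is to split the run at the first time $\tau$ the current individual lies in $\{x^+, x^*\}$, so that $T = T_2 + R$. Here $T_2$ is the quantity from Lemma~\ref{lem:time_to_optimize_path} and $R$ is the number of further steps until $x^*$ is sampled. Linearity of expectation then gives $\expectation[T] = \expectation[T_2] + \expectation[R]$. Lemma~\ref{lem:time_to_optimize_path} already supplies $\expectation[T_2] = (a/c)e^c n^k(1+o(1))$, so all the work is in $\expectation[R]$.

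To handle $R$, let $A$ be the event that $x^+$ is hit before $x^*$, so that $\Pr(\overline{A}) = O(1/n^2)$ by Lemma~\ref{lem:probability_of_no_early_jump}. We condition on $A$.
\begin{itemize}
\item On $\overline{A}$ we have $R = 0$, because the process is already at $x^*$.
\item On $A$ the process sits at $x^+$ at time $\tau$. The elitist $(1+1)$ EA can only leave $x^+$ by moving to the single better point $x^*$. By the strong Markov property, $R$ is then distributed exactly as the geometric variable $T_1$ of Lemma~\ref{lem:sec_best_to_optimum}, independently of how $x^+$ was reached.
\end{itemize}
Hence
\[
\expectation[R] = \Pr(A)\,\frac{e^c}{c^k}n^k(1+o(1)) = \bigl(1-O(1/n^2)\bigr)\frac{e^c}{c^k}n^k(1+o(1)) = \frac{e^c}{c^k}n^k(1+o(1)).
\]
Adding the two contributions gives $\left(\frac{a}{c}+\frac{1}{c^k}\right)e^c n^k(1+o(1))$. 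Both terms are of the same order $n^k$ with positive constants, so the $o(1)$ factors combine into a single $o(1)$.

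The main obstacle is justifying the decomposition carefully. First, we must check that $T_2$ as defined in Lemma~\ref{lem:time_to_optimize_path} really is the first hitting time of $\{x^+,x^*\}$ from a uniformly random start. Its expectation includes the $O(n\log n)$ \zeromax phase, and we also have to cover the rare cases where the initial point, or the point at the end of the hill phase, is already on $\pak$ or equals $x^*$. Those cases only reduce $T_2$ and are absorbed into the $o(1)$. Second, the conditional-on-$A$ argument needs the Markov property of the EA at the stopping time $\tau$; I would state this explicitly rather than merely asserting independence. Since Lemma~\ref{lem:probability_of_no_early_jump} needs $k\ge 4$, that assumption is inherited here.
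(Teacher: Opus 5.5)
Your proposal is correct and follows the paper's proof, which likewise adds the expectations from Lemmas~\ref{lem:time_to_optimize_path} and \ref{lem:sec_best_to_optimum} for the upper bound and uses the $1-O(1/n^2)$ bound of Lemma~\ref{lem:probability_of_no_early_jump} for the lower bound. Your exact identity $\expectation[T]=\expectation[T_2]+\Pr(A)\,\expectation[T_1]$, obtained from the strong Markov property at $\tau$, simply packages those two directions into a single step.
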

\begin{proof}
  That $\left(\frac{a}{c} + \frac{1}{c^k}\right) e^c n^k (1 + o(1))$ is an upper bound for $\expectation[T]$ follows by adding
  the expected times in Lemmas~\ref{lem:time_to_optimize_path} and \ref{lem:sec_best_to_optimum}. That this 
  expression works as a lower bound follows from Lemma~\ref{lem:probability_of_no_early_jump} and multiplying the
  bound by $1 - O(1/n^2)$.
  %%%
\end{proof}

\begin{lemma}
\label{lem:opt_mute_rates_are_argmin_of_gx}
  For $a > 0$, let $g(x) = g_a(x) = (a/x + 1/x^k)e^x$ with domain restricted to positive $x$.
  Let $c =  \argmin g_a(x)$. Then as $n \to \infty$, the optimal 
  optimal mutation rate for the \oea on
   \hillpathjumpak approaches $c/n$
%  \[
%     \{m \in \mathbb{R}_{> 0} \mid m =  \argmin g_a(x) \text{ for some } a \}.
 % \]
\end{lemma}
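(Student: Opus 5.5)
By Lemma~\ref{lem:optimization_time}, for every fixed constant $x > 0$ the expected optimization time with mutation rate $x/n$ is $E_n(x) = g_a(x)\, n^k (1+o(1))$. So the task is to show that minimizers of the finite-$n$ expected runtime, rescaled by $n$, converge to the minimizer of $g_a$. The plan has three parts: show $g_a$ has a unique, strictly isolated minimizer $c$; show the optimal rate $p_n$ has $np_n$ in a fixed compact interval $[\alpha,\beta] \subset (0,\infty)$ for large $n$; and use a uniform version of Lemma~\ref{lem:optimization_time} on that interval to conclude $np_n \to c$.

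\textbf{Step 1: the minimizer of $g_a$.} Compute
\[
g_a'(x) = e^x x^{-k-1}\bigl(a x^{k} - a x^{k-1} + x - k\bigr),
\]
so the sign of $g_a'$ is that of $h(x) = a x^{k-1}(x-1) + x - k$. For $x \le 1$, $h(x) \le x - k < 0$. On $[1,\infty)$, $h$ is strictly increasing, with $h(1) = 1-k < 0$ and $h(x) \to \infty$. Hence $h$ has a unique positive root $c$, and in fact $c \in (1,k)$ since $h(k) > 0$. It follows that $g_a$ strictly decreases on $(0,c)$ and strictly increases on $(c,\infty)$. Also $g_a(x) \to \infty$ both as $x \to 0^+$ (because of the $1/x^k$ term) and as $x \to \infty$.

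\textbf{Step 2: uniformity and compactness.} The $(1+o(1))$ terms in Lemmas~\ref{lem:time_to_optimize_path}--\ref{lem:optimization_time} come only from expressions such as $(1-x/n)^{n-k} = e^{-x}(1+o(1))$, the $O(1/n^2)$ drift corrections, and Markov bounds. All of these are uniform for $x$ in any compact interval $[\alpha,\beta] \subset (0,\infty)$, so I will re-check those proofs with $x$ ranging over such an interval. This gives $\sup_{x\in[\alpha,\beta]} |E_n(x)/(g_a(x) n^k) - 1| \to 0$.

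Next I need crude bounds excluding rates outside $[\alpha/n, \beta/n]$, which I expect to be the main obstacle, since the earlier lemmas assume constant $c$.
\begin{itemize}
\item For $p \le \alpha/n$, the final $k$-bit jump alone costs at least $1/p^k \ge n^k/\alpha^k$. Choosing $\alpha$ small makes this exceed $2 g_a(c) n^k$.
\item For $p \ge \beta/n$ with $p \le 1/2$, the path phase needs about $a n^{k-1}$ improvements. Each step advances in expectation at most roughly $pn(1-p)^{n-1} + O(\text{higher order})$, a quantity at most $x e^{-x}$ up to lower-order terms, where $x = pn$. This gives a lower bound of order $(a e^{x}/x) n^k$, which is large for large $\beta$. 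The walk must also not skip the path via the jump, which Lemma~\ref{lem:probability_of_no_early_jump}-type reasoning controls.
\item For very large $p$ (including $p > 1/2$), the algorithm cannot even follow the path efficiently. A crude bound suffices there: the probability of any improving move from a path point is at most $O(p(1-p)^{n-\sqrt{n}})$ plus exponentially small terms, which again yields runtime $\gg n^k$.
\end{itemize}
I would state these as a separate rough lower bound rather than redo the full analysis.

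\textbf{Step 3: conclusion.} Fix $\varepsilon > 0$. By Step 1 and continuity, on $[\alpha,\beta] \setminus (c-\varepsilon, c+\varepsilon)$ we have $g_a \ge g_a(c) + \delta$ for some $\delta > 0$. By uniform convergence, for large $n$ every $x$ in that set has
\[
E_n(x) > (g_a(c)+\delta/2)\, n^k > E_n(c).
\]
Rates outside $[\alpha,\beta]$ are excluded by Step 2. Therefore the optimal $p_n$ satisfies $|np_n - c| < \varepsilon$ for all large $n$, so $np_n \to c$, i.e., $p_n \sim c/n$.
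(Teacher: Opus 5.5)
Your proposal is correct and takes the paper's route: the paper's proof is a single remark that the claim follows from Lemma~\ref{lem:optimization_time}, i.e.\ that the optimal rate is the minimizer of the leading constant $g_a(x)$. Your Steps~1--3 make that remark rigorous by supplying what it leaves implicit: uniqueness of the minimizer (the paper's Lemma~\ref{lem:relating_zeros_to_argmin}), uniformity of the $o(1)$ term for $x$ in compact intervals, and the exclusion of rates with $np\notin[\alpha,\beta]$, which Lemma~\ref{lem:optimization_time} does not cover since it is stated only for a fixed constant $x$. Two small repairs to your sketch: the per-step path drift is about $p(1-p)^{n-1}\approx xe^{-x}/n$, not $pn(1-p)^{n-1}$ (the former is what gives your bound $(ae^x/x)n^k$), and for $p>1/2$ you must also rule out reaching $x^*$ from off the path, not only show that progress along the path is slow.
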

\begin{proof}
  This follows from Lemma~\ref{lem:optimization_time}. The reason we cannot say that the optimal mutation
  rate \emph{equals} $c/n$ is because of the asymptotic term $o(1)$, which forces us to make $n \to \infty$.
\end{proof}

\begin{lemma}
\label{lem:relating_zeros_to_argmin}
  Assume $k \geq 2$.
  Let $g(x)$ be as in Lemma~\ref{lem:opt_mute_rates_are_argmin_of_gx}, 
  and let $q(x) = a x^k - a x^{k-1} + x - k = ax^{k-1}(x - 1) + x -k$ 
  with domain restricted to positive $x$.
  Then $q(x)$ has a unique zero $z_0 \in (1, \infty)$, and in fact, $z_0 = \argmin g(x)$.
\end{lemma}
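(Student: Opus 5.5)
Differentiate $g(x) = (a x^{-1} + x^{-k})e^x$ on $(0,\infty)$ directly:
\[
g'(x) = e^x\left(\frac{a}{x} + \frac{1}{x^k} - \frac{a}{x^2} - \frac{k}{x^{k+1}}\right) = \frac{e^x}{x^{k+1}}\left(a x^k - a x^{k-1} + x - k\right) = \frac{e^x}{x^{k+1}}\, q(x).
\]
Since $e^x/x^{k+1} > 0$ for $x>0$, the sign of $g'$ equals the sign of $q$, and the critical points of $g$ on $(0,\infty)$ are exactly the positive zeros of $q$. The plan is to show that $q<0$ on $(0,1]$, that $q$ has exactly one zero $z_0$ in $(1,\infty)$ with $q<0$ on $(1,z_0)$ and $q>0$ on $(z_0,\infty)$, and then to read off that $g$ decreases on $(0,z_0]$ and increases on $[z_0,\infty)$. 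Hence $z_0$ is the unique global minimizer, i.e.\ $z_0 = \argmin g$.

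For the sign analysis, use the factored form $q(x) = a x^{k-1}(x-1) + (x-k)$. On $(0,1]$ the first term is $\le 0$ and the second is $\le 1-k<0$ since $k\ge 2$, so $q<0$ there. In particular $g$ has no critical point in $(0,1]$, and $g$ is strictly decreasing there (also $g\to\infty$ as $x\to 0^+$). Next, $q(1) = 1-k < 0$ and $q(x)\to\infty$ as $x\to\infty$ because the leading term $a x^k$ dominates. By the intermediate value theorem $q$ has a zero in $(1,\infty)$.

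The main point is uniqueness on $(1,\infty)$. There $q$ is strictly increasing, since
\[
q'(x) = a k x^{k-1} - a(k-1)x^{k-2} + 1 = a x^{k-2}\bigl(kx - (k-1)\bigr) + 1 > 0
\]
for $x \ge 1$ (indeed for $x > (k-1)/k$). So the zero $z_0$ is unique, with $q<0$ on $(1,z_0)$ and $q>0$ on $(z_0,\infty)$. Combined with the previous paragraph, $g'<0$ on $(0,z_0)$ and $g'>0$ on $(z_0,\infty)$. Thus $g$ attains its minimum over $x>0$ exactly at $z_0$; existence of the minimum is also clear from $g\to\infty$ at both ends of $(0,\infty)$. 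I do not expect any real obstacle; the only care needed is the sign of $q'$, which is handled by restricting to $x\ge 1$.
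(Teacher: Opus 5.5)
Your proof is correct and follows essentially the same route as the paper. Both arguments use the same three ingredients: $q<0$ on $(0,1]$, strict monotonicity of $q$ on $[1,\infty)$ via $q'(x)=ax^{k-2}(kx-(k-1))+1>0$, and the identity $g'(x)=e^x q(x)/x^{k+1}$, which transfers the sign change of $q$ to $g'$. You also spell out the existence step via the intermediate value theorem and the monotonicity of $g$ on $(0,z_0]$ and $[z_0,\infty)$, which the paper only sketches.
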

\begin{proof}
  Note that $q(x) < 0$ for $x \in (0, 1]$, and so $q(x) = 0$ implies $x > 1$. Next, 
  \[
    \begin{aligned}
       q'(x) &= a k x^{k-1} - a(k-1)x^{k-2} + 1 \\
               &= a x^{k-2}(k x - (k-1)) + 1.
    \end{aligned}
  \]
  Therefore, $q'(x) > 0$ for $x \geq 1$. Hence, $q(x)$ is increasing on $[1, \infty)$.
  Since, as we have seen, $q(1) < 0$, we may conclude that $q(x)$ has a unique
  zero in $(1, \infty)$ because $q(x)$ is eventually positive. 
  
  As we shall soon see, that the zero of $q(x)$ is the $\argmin$ of $g(x)$ follows from the fact
  that $x^{k+1}e^{-x}g'(x) = q(x)e^x$. Indeed, 
  \[
    g'(x) = \left(\frac{a}{x} + \frac{1}{x^k} - \frac{a}{x^2} - \frac{k}{x^{k+1}} \right) e^x.
  \]
  Any local extrema of $g(x)$ occur when $g'(x) = 0$, and $g'(x) = 0$ if and only if
  $x^{k+1}e^{-x}g'(x) = 0$. We are done by noting first that $x^{k+1}e^{-x}g'(x) = q(x)$
  and second that $q(x)$ goes from being negative to positive, which implies that
  the local extrema is the absolute minimum of $g(x)$.
\end{proof}

\begin{lemma}
\label{lem:zero_any_value_in_interval}
   Let $q(x) = q_a(x) = a x^k - a x^{k-1} + x - k = ax^{k-1}(x - 1) + x -k$, and let $Z(a)$ be the zero
   of $q_a(x)$. Then by appropriate choice of $a$, $Z(a)$ can be any number in $(1, k)$.
\end{lemma}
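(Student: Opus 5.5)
Given a target $z \in (1, k)$, the simplest route is to solve $q_a(z) = 0$ for $a$ directly instead of using a continuity argument. Since $q_a(z) = a z^{k-1}(z-1) + z - k$ is affine in $a$, the equation $q_a(z)=0$ is equivalent to
\[
  a = \frac{k - z}{z^{k-1}(z-1)}.
\]
For $z \in (1,k)$ the numerator $k-z$ is positive and the denominator $z^{k-1}(z-1)$ is positive. So this $a$ lies in $(0,\infty)$ and is therefore admissible.

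With this choice of $a$, we have $q_a(z)=0$ and $z \in (1,\infty)$. Lemma~\ref{lem:relating_zeros_to_argmin} says that $q_a$ has a \emph{unique} zero in $(1,\infty)$, so this zero must be $z$, i.e.\ $Z(a) = z$. Since $z \in (1,k)$ was arbitrary, every value in $(1,k)$ is attained.

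For completeness I would also check that the image of $a \mapsto Z(a)$ is exactly $(1,k)$ and not more. If $Z(a)\ge k$, then $q_a(Z(a)) \ge a k^{k-1}(k-1) > 0$, which is a contradiction. If $Z(a)\le 1$, this contradicts Lemma~\ref{lem:relating_zeros_to_argmin}. Moreover, the map $z \mapsto (k-z)/(z^{k-1}(z-1))$ is strictly decreasing from $+\infty$ to $0$ on $(1,k)$, so $Z$ is a continuous decreasing bijection $(0,\infty)\to(1,k)$. This is not needed for the statement, but it is reassuring.

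There is essentially no obstacle here. The only point requiring care is invoking uniqueness from Lemma~\ref{lem:relating_zeros_to_argmin}: it guarantees that the zero we constructed is the $Z(a)$ the lemma refers to, and not merely some other zero of $q_a$.
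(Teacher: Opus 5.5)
Your proof is correct, but it takes a different route from the paper.

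The paper's argument has three steps. It asserts that $Z(a)$ depends continuously on $a$, appealing to continuity of polynomial roots in the coefficients. It then argues that $Z(a)\to 1$ as $a\to\infty$ and $Z(a)\to k$ as $a\to 0$. Finally it applies the intermediate value theorem.

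You instead use the fact that $q_a(z)$ is affine in $a$. A prescribed $z\in(1,k)$ is then hit by the explicit choice $a=(k-z)/(z^{k-1}(z-1))>0$. The uniqueness part of Lemma~\ref{lem:relating_zeros_to_argmin} identifies $z$ as $Z(a)$.

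Your route is shorter and fully rigorous. It sidesteps two points the paper leaves informal:
\begin{itemize}
\item Continuity: the specific real root in $(1,\infty)$ must be selected continuously from the multi-valued root function, which really needs something like the implicit function theorem together with $q_a'(Z(a))>0$.
\item The limit $Z(a)\to k$ as $a\to 0$, which the paper argues only heuristically.
\end{itemize}

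Your route also yields an explicit parameter realizing a target optimal rate $c/n$ in Theorem~\ref{thm:main_result}, namely $a=(k-c)/(c^{k-1}(c-1))$. In addition, your side remark shows that $a\mapsto Z(a)$ is a continuous decreasing bijection from $(0,\infty)$ onto exactly $(1,k)$. The paper's continuity-plus-IVT argument would survive if the dependence on $a$ were not so simple, but here that generality buys nothing.
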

\begin{proof}
  First, note that $Z(a)$ is continuous as a function of $a$ (because the complex roots of
  a polynomial are a continuous (multi-valued) function of the coefficients). 
  
  Next, note that $\lim_{a \to \infty} Z(a) = 1$; indeed, this is because $q(1) = 1-k < 0$, and for all
  $x > 1$, we have $\lim_{a \to \infty} a x^{k-1}(x-1) = \infty$.
  
  Next, note that $\lim_{a \to 0} Z(a) = k$; indeed, this is due to the fact that if 
  $a x^{k-1}(x-1) \approx 0$, then $q(x) = 0$ near where $x - k = 0$, and this happens for $x = k$. 
  
  As has been said, $Z(a)$ is a continuous function of $a$, and $Z(a)$ can take on values
  arbitrarily close to 1 and to $k$. Therefore, it can take on any value in $(1, k)$.
\end{proof}

\begin{proof}[Proof of Theorem~\ref{thm:main_result}]
 This follows from Lemmas~\ref{lem:opt_mute_rates_are_argmin_of_gx}, \ref{lem:relating_zeros_to_argmin}, 
 and \ref{lem:zero_any_value_in_interval}.
  %%%
\end{proof}

%\section*{Acknowledgments}
%I would like to thank Benjamin Doerr for a helpful comment on an earlier version of this paper.

\printbibliography

\end{document}